\documentclass{article}

\PassOptionsToPackage{square,numbers}{natbib}
\usepackage[preprint]{neurips_2026}

\usepackage[utf8]{inputenc} 
\usepackage[T1]{fontenc}    
\usepackage{hyperref}       
\usepackage{url}            
\usepackage{booktabs}       
\usepackage{amsfonts}       
\usepackage{nicefrac}       
\usepackage{microtype}      
\usepackage{xcolor}         

\newcommand{\system}{PRISMat}
\usepackage{amsmath}
\usepackage{algorithm}
\usepackage{algpseudocode}
\usepackage{wrapfig}
\DeclareMathOperator{\LatticeGenerator}{LatticeGenerator}
\DeclareMathOperator{\AtomGenerator}{AtomGenerator}
\DeclareMathOperator{\PositionGenerator}{PositionGenerator}
\DeclareMathOperator{\uniform}{uniform}
\usepackage{cleveref}
\usepackage{graphicx}
\usepackage{subcaption}
\newtheorem{proposition}{Proposition}
\newtheorem{proof}{Proof}
\title{PRISMat: Policy-Driven, Permutation-Invariant Autoregressive Material Generation}

\author{%
  Claire Schlesinger \\
  Khoury College of Computer Sciences \\
  Northeastern University \\
  Boston, MA 02115 \\
  \texttt{schlesinger.e@northeastern.edu} \\
  \And
  Circe Hsu \\
  Khoury College of Computer Sciences \\
  Northeastern University \\
  Boston, MA 02115 \\
  \texttt{hsu.circe@northeastern.edu} \\
  \AND
  Peter Schindler \\
  College of Engineering \\
  Northeastern University \\
  Boston, MA 02115 \\
  \texttt{p.schindler@northeastern.edu} \\
  \And
  Robin Walters \\
  Khoury College of Computer Sciences \\ 
  Northeastern University \\
  Boston, MA 02115 \\
  \texttt{r.walters@northeastern.edu} \\
}

\begin{document}

\maketitle

\begin{abstract}
Rapid identification of candidate materials with target properties has become a key task in materials science. Machine learning has emerged as an alternative to physics-based simulation, offering a faster and cheaper way to filter materials based on their stability and other target properties, reducing the number of candidates that reach the costly synthesis stage. Recently, Large Language Models (LLMs) have been applied to this role, but these models are parameter-heavy and computationally expensive both during training and at inference time, making them unsuitable for high-throughput tasks. This inefficiency stems from both the large over-parameterization of language models and the difficulty of framing material generation as a sequence learning problem. In this paper, we present \system{}, a cost-effective, permutation-invariant model, which addresses these limitations. We show that \system{}, despite taking less time for inference, is able to outperform LLMs in generating crystal slabs conditioned on critical materials' surface properties. In targeted material discovery, we achieve mean absolute errors of 0.188 eV/\AA$^2$ and 2.79 eV for cleavage energy and work function tasks, respectively, reducing the error of the next best model by 4$\times$.
\end{abstract}

\section{Introduction}

\begin{wrapfigure}{R}{0.55\textwidth}
    \centering
    \includegraphics[width=0.55\textwidth]{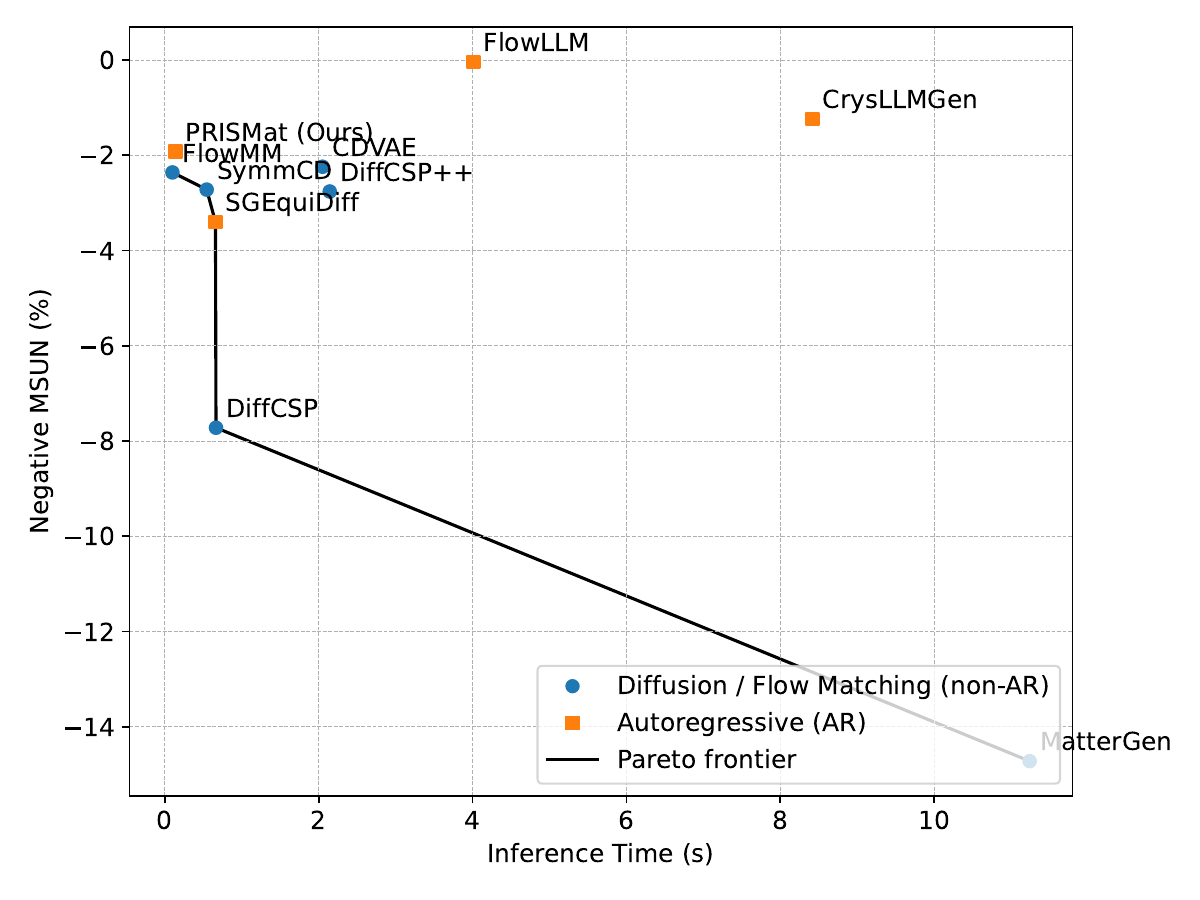}
    \caption{Pareto frontier of time it takes to generate a single crystal versus the negative rate of metastable, unique, and novel (MSUN) crystals. Orange squares indicate autoregressive techniques, while blue circles indicate pure diffusion techniques. \system{} performs on par with the best diffusion models and clearly outperforms other autoregressive LLM systems, showing that it is the most efficient autoregressive system.}
    \label{fig:pareto}
\end{wrapfigure}

Identifying novel materials is slow and expensive due to the vast search space and the difficulty of synthesizing and evaluating candidates. The challenge is even greater when trying to identify candidates with specific desirable properties. To accelerate this process, machine learning systems have been used to both rapidly evaluate the stability and properties of candidates and to discover new materials \cite{Riebesell2025matbenchdiscovery, rhodes2025orbv3atomisticsimulationscale, wood2025umafamilyuniversalmodels, chibani2020mlapproachforpredictionofmaterialsproperties, chan2022applicationofmlforadvancedmaterialpredictionanddesign, kadulkar2022mlassisteddesign}. For example, several methods have been proposed to generate candidate materials with a target band gap or space group \cite{MatterGen2025, bone2025discoveryrecoverycrystallinematerials, chang2025sgequidiff, jiao2024diffcsppp}. However, these methods generate only bulk crystals, meaning idealized, defect-free structures that repeat infinitely in all three dimensions. In reality, all crystals are finite and therefore bounded by surfaces, which for many technologically relevant applications are the primary determinants of material properties. These properties are crucial for electron emission devices and heterogeneous catalysis \cite{Lin2023workfunction, belluci2026thermionic, chen2024workfunctionguidedelectrocatalystdesign, zeradjanin2017balancedworkfunction,radinger2022workfunction}. They also determine contact barriers at semiconductor interfaces and yield the approximate shape of nanoparticles \cite{Freeouf1981schottkybarrier, cankaya2004schottkybarrier, wulff1901, maxson2025metalsupportinteractions}.

Beyond their inability to capture real-world structural complexity, existing generative methods carry a further practical limitation: computational cost, with LLM-based approaches in particular requiring prohibitively long inference times for high-throughput screening. Typically, LLM-based methods generate materials in one shot, often directly outputting a Crystallographic Information File (CIF) or other crystal encoding to represent the structure. This method suffers from heavy redundancy: by reordering, many CIF files encode the same crystal. This dramatically enlarges the output space, making the distribution of ideal structures more difficult to learn. Current methods attempt to address this limitation by imposing a canonical order in the CIF or using extensive data augmentation during training \cite{chang2025sgequidiff, gruver2023CrystalTextLLM}. Enforcing an ordering to atoms through canonicalization does not consider any underlying physics and can lead to instabilities \cite{ma2024canonicalizationperspectiveinvariantequivariant}. Similarly, permuting atom order as an augmentation may encourage the model to learn permutation invariance, but it significantly increases training time without fully resolving the problem. Despite these drawbacks, autoregressive systems are still an attractive choice for policy-guided generation, as they can correct or reject certain crystals that do not fall within desired generation guidelines.

We propose \system{} (PeRmutation-Invariant Sequential Material generation), a system which combines autoregressive generation with flow matching to generate novel materials. This allows us to take advantage of the controllability of policy-driven autoregressive generation with the speedups provided by flow matching. \system{} addresses the issues with autoregressive generation by representing crystals in a more efficient format and by being trained in a permutation-invariant manner. \system{} is composed of three parts: 1)~A Gaussian mixture model which predicts lattice parameters, 2)~An $E(3)$ equivariant graph neural network (GNN) which autoregressively predicts atoms, and 3)~An $E(3)$ equivariant flow matching model to assign positions to each atom \cite{lipman2023flow, satorras2021egnn}. This three-part setup enables interventions after each step, so we can customize the generation process.

\system{} achieves permutation invariance by reinterpreting the conditional output distribution of the autoregressive model as the cumulative distribution of all atom types remaining in the crystal instead of the probability distribution of the immediate next token.  While next token prediction is well-suited for text generation, where order is critical, it is actually a hindrance in material generation, where no physically meaningful order exists.  By reinterpreting the output distribution, our method allows the use of the same underlying architectures used for text generation while enforcing permutation invariance appropriate to the setting.

Our contributions are:
\begin{itemize}
    \item We introduce \system{}, the first autoregressive, permutation invariant framework for crystal generation.
    \item We analyze \system{}'s performance to identify the benefit of permutation invariance and policy-guided generation as well as optimizing the sampling parameters for \system{}.
    \item We evaluate our method's ability to do conditional material generation on cleavage energy and work functions using the crystal slab dataset from \citet{schindler2024discoveryofstablesurfaces} and show it outperforms other autoregressive, conditioned LLM techniques by quartering the overall error on conditioned generation and having the lowest time per generated metastable, unique, and novel (MSUN) structure of any autoregressive model.
\end{itemize}

\section{Related Work}

\paragraph{Autoregressive Graph Generation}

Permutation invariance has long been a challenge in autoregressive graph generation. G-SchNet \cite{GSchnet} is an autoregressive method for the generation of rotationally invariant point graphs used for molecule generation. G-SchNet handles permutation invariance by relying on the structure of molecules by selecting an atom to focus on and then predicting an atom to bond to that atom. GraphRNN builds a graph by representing it as a unique sequence and predicting those sequences to generate graphs \cite{GraphRnn}. GraphRNN uses a BFS node ordering scheme to reduce the complexity due to the large number of possible node orderings. GCPN autoregressively constructs a graph using a generative adversarial network (GAN) and reinforcement learning policies as guidance during training \cite{you2018gcpn}. GCPN gets its permutation invariance through its discriminator by only looking at the final generated structure, and its reward function does not rely on the order of atom placement. Our method is distinct because, unlike GCPN, GSchNet, or GraphRNN, it works on crystals whose periodic structure means there is no canonical and physically meaningful ordering to predict atoms in.

\paragraph{Material Generation Via Diffusion}

Diffusion models are a popular choice for \textit{de novo} crystal generation \cite{xie2021cdvae, jiao2023diffcsp, levy2024symmcd, jiao2024diffcsppp, miller2024flowmm, MatterGen2025, chang2025sgequidiff}. CDVAE uses a variational autoencoder (VAE) to add the ability to do inverse design, the process of creating a material with a desired property, and then uses a diffusion model to denoise the output to produce the new material \cite{xie2021cdvae}. Our method differs because, rather than using a VAE and optimizing for the property in latent space, we use autoregressive generation and directly condition on the desired properties, simplifying training and generation. Another approach to \textit{de novo} crystal generation is DiffCSP, a pure diffusion model which utilizes a periodic $E(3)$ invariant GNN \cite{jiao2023diffcsp}. An alternative to diffusion-based methods is flow matching, with FlowMM leveraging Riemannian flow matching to simultaneously predict crystal structures and atom types \cite{miller2024flowmm}. We also use Riemannian flow matching to predict the atom positions, but use autoregressive generation for atom types, as it allows for fewer generation steps. 

In crystals, symmetry is fully described by space groups, which classify all symmetry operations consistent with three-dimensional lattice periodicity. These space groups are highly relevant to chemical properties and the structure of the unit cell. SymmCD decomposes the unit cell of a crystal into the asymmetric unit, the smallest unit that can reproduce the unit cell through these symmetry transformations, and predicts it with a diffusion model \cite{levy2024symmcd}. MatterGen, DiffCSP++, and SGEquiDiff are all diffusion models that use information about the space group when generating the crystal unit cell \cite{MatterGen2025, jiao2024diffcsppp, chang2025sgequidiff}. Mattergen is conditioned on the space group number. DiffCSP++ uses the restrictions on atom counts and positions to help with diffusion, and SGEquiDiff is fully equivariant to the space group, using the restrictions on atoms and Wyckoff positions to place and diffuse the structure. Interestingly, SGEquiDiff uses an autoregressive method to predict atom types and Wyckoff positions before diffusing their position, but enforces an ordering on the types of atoms predicted. Our method is similar to SGEquiDiff, but it does not require a predefined atom ordering. Instead, permutation invariance is enforced during training through the choice of loss function. In addition, our approach does not incorporate any explicit information about the crystal’s space group, which increases our speed in inference.

\paragraph{Autoregressive Material Generation}

Some autoregressive methods use LLMs to predict novel materials by predicting a Crystallographic Information File (CIF) \cite{antunes2024CrystaLLM, gruver2023CrystalTextLLM, sriram2024flowllm, khastagir2025CrysLLMGen}. CrystaLLM trains an LLM from scratch on CIFs to predict novel CIFs \cite{antunes2024CrystaLLM}. CrystaLLM-$\pi$ expands on CrystaLLM by adding in a system to pass property values directly into every layer of the transformer rather than in the text prompt \cite{bone2025discoveryrecoverycrystallinematerials}. CrystalLLM uses a pretrained LLM and finetunes it on additional CIFs in order to produce novel CIFs \cite{gruver2023CrystalTextLLM}. FlowLLM takes the CIFs output from CrystalLLM and uses a flow matching model to refine the outputs \cite{sriram2024flowllm}. CrysLLMGen is similar to FlowLLM but uses a diffusion model for refinement \cite{khastagir2025CrysLLMGen}. LLMatDesign differs from others by using an LLM starting from an initial composition and design conditions and using it to autoregressively predict changes until the desired material properties are achieved \cite{jia2024llmatdesignautonomousmaterialsdiscovery}. While our method is inspired by LLM-style training and generation, it does not operate on language, as the CIF is a poorer representation of crystals than the unit cell. Instead, it directly predicts atom types and does not employ causal masking, opting instead to enforce permutation invariance in the model design.

\section{Background}

In this section, we cover the necessary background information to understand \system{}. We overview a mathematical formulation of crystal unit cells, outline the process of autoregressive generation and Riemannian flow matching, provide a formal definition of equivariance, and cover the necessary background information relevant to our conditional slab generation task.

\subsection{Definition of a Crystal Cell and Crystal Slab}

The defining characteristic of a crystal is the periodic structure of atoms. Due to this periodicity, crystals can be compactly represented by a unit cell, a parallelepiped-shaped subsection of the crystal. The unit cell provides a computationally efficient representation of the infinitely repeating crystal structure. The full crystal can be reconstructed by tiling the unit cell along all three lattice basis vectors. A crystal $C$ is defined by a tuple $C = (L, A, X)$ where $L = (l_1, l_2, l_3) \in \mathbb{R}^{3\times3}$ are the three lattice vectors that define the periodic boundaries of the crystal, $A = (a_1, a_2, \ldots, a_N) \in atoms^N$ are the atom types, where $atoms$ is a set of available elements, $N$ is the number of atoms in the unit cell, $X = (x_1, x_2, \ldots, x_N) \in [0,1)^{N \times 3}$ are the fractional coordinates in the crystal which show the atoms' positions in the unit cell as a fraction of the distance along the lattice vectors. To get the Cartesian coordinates from the fractional coordinates and lattice vectors, we simply multiply the fractional coordinates by the lattice vectors $XL^T=X_{\mathrm{Cartesian}}$.

A crystal slab can still be defined by a crystal tuple $C = (L, A, X)$, but slab structures inherit a broken $E(3) \rightarrow SO(2)$ symmetry via cutting of the bulk crystal structure, creating a unique design challenge for fully equivariant models. Crystal slabs are more reflective of realistic structures as they contain the termination that occurs at the boundary of a crystal. Here, we consider two critical properties of crystal slabs: the cleavage energy and work function. The cleavage energy is the amount of energy required to split the crystal along a specific Miller index. The cleavage energy (which is equal to the surface energy for symmetric slabs) determines the stability of surfaces. A slab has two work functions, one for the bottom surface of the slab and one for the top. The work function is the amount of energy necessary to free an electron from the surface of a slab.

\subsection{Autoregressive Generation}

Autoregressive generation is the process of predicting a sequence using the previously predicted elements of the sequence. A model $p$ works by producing a distribution over the tokens in the vocabulary $V$, $p(v_t|v_0,v_1, \ldots, v_{t-1})$ where $v_i \in V$, and selecting the next token from that distribution.

The variance of the distribution $p$ is controllable by changing the temperature, $\tau$, and nucleus sampling, $P$, hyperparameters. The temperature modulates the softmax function. Temperatures that are less than $1$ increase the likelihood of more probable tokens, while temperatures that are greater than $1$ increase the likelihood of less probable tokens from the original distribution. Nucleus sampling cuts off very improbable tokens, which increases consistency.

\subsection{Equivariance}

A function $f\colon X \rightarrow Y$ is equivariant to a group $G$ if for any $x \in X$ and any $g \in G$, $f(gx) = gf(x)$. The function $f$ is invariant if $f(gx) = f(x)$. In this work, we use $E(3)$ equivariant GNNs where $E(3)$ is the group of all rotations, reflections, and translations over $\mathbb{R}^3$ \cite{satorras2021egnn}. 
$E(3)$ equivariance is a desired property when working with crystals, as the crystal's properties are unchanged by any rotation or translation. In crystal generation, the problem is $E(3)$ invariant, as no matter the orientation, chirality, or position of the crystal, the target unit cell remains constant.

\subsection{Riemannian Flow Matching}

Flow matching is the process of learning a vector field from some probability distribution, usually a normal distribution, to the true data distribution \cite{lipman2023flow}. A flow map is learned from data to noise, while generation reverses the process to go from noise to data. Velocities $u_t$ are predicted, going from the true data distribution to the noise distribution. The velocities are computed at train time by taking a random sample from the random initial data distribution and the true data distribution and finding the displacement between them.

Due to the periodicity inherent to crystal structures, it is necessary to define a manifold properly capturing this periodicity, on which the flow is learned. Without this periodicity, it is possible to predict positions that fall outside the unit cell, requiring a postprocessing step to map external atoms back into the unit cell. Riemannian flow matching extends the conventional flow matching algorithm to enable flows on general geometries \cite{chen2024flow}. This allows us to implement flows that respect the periodicity of the unit cell. To remove the redundant positions from our case, we use a 3-torus manifold as it correctly implements the periodicity of the unit cell. 

\section{Method}

\begin{figure}[t]
    \centering
    \includegraphics[width=\linewidth]{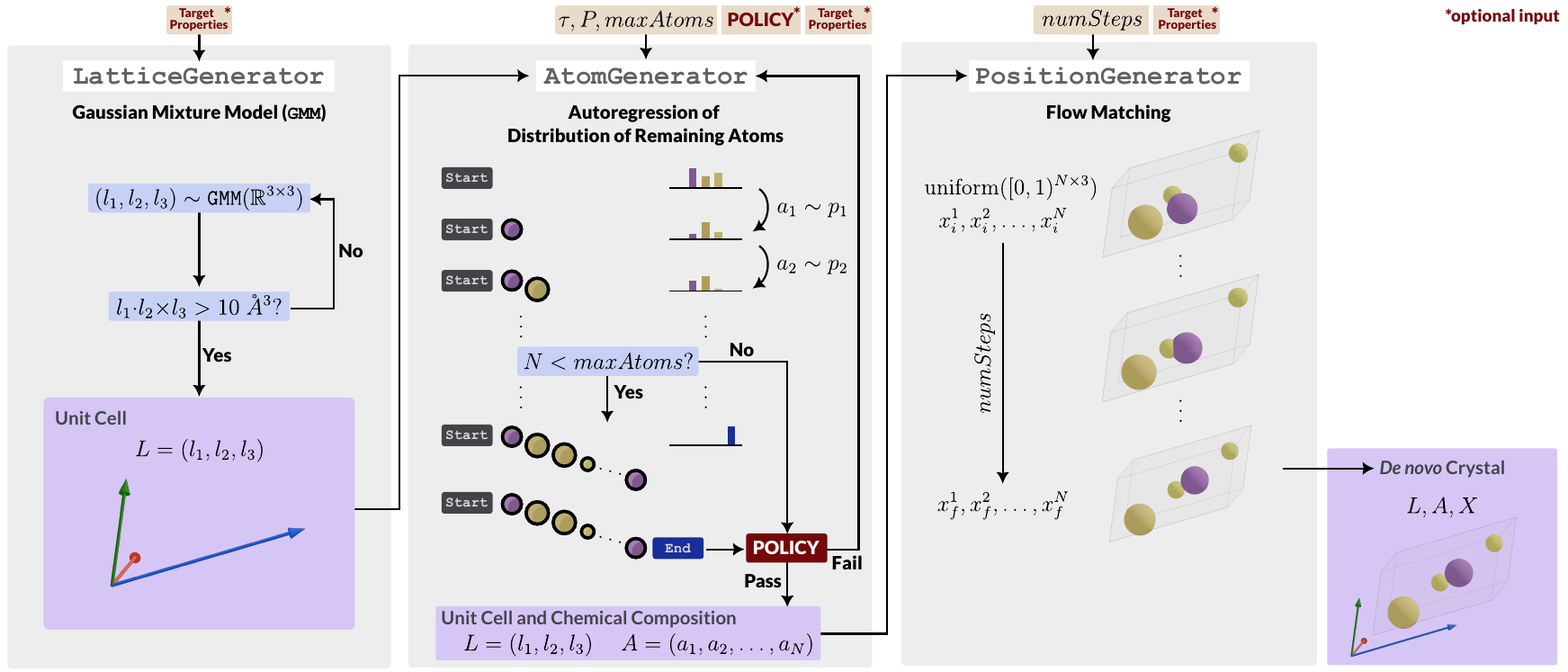}
    \caption{The three stages of \system{}. We start by predicting the periodic boundaries of the unit cell. We then autoregressively predict each atom until we reach an <end> token. Finally, we use flow matching to position each atom in the unit cell. By splitting generation into three stages, we allow for more control over generation and faster rejection of impossible crystals. Mandatory user inputs are $\tau$, $P$, $maxAtoms$, and $numSteps$, while the policy and target materials properties for conditional generation are optional. The pseudocode can be found in \cref{alg:automatgen-sample}.}
    \label{fig:system-diagram}
\end{figure}

\system{} is made up of three parts, each responsible for building a section of the crystal: the $\LatticeGenerator$, a Gaussian mixture model which is responsible for generating the lattice vectors, the $\AtomGenerator$, an $E(3)$ invariant GNN which autoregressively predicts the atoms in a crystal, and the $\PositionGenerator$, an $E(3)$ equivariant GNN which runs a flow matching algorithm to predict the crystal structure. In this section, we discuss each portion of \system{}, how they operate, and why that method was selected. A full overview of \system{} can be seen in \cref{fig:system-diagram}. \Cref{fig:generated-examples} showcases a real crystal from MP-20 compared to a crystal \system{} generated after being trained on MP-20.

\subsection{$\LatticeGenerator$}

\system{} starts by predicting the three lattice vectors that make up the periodic boundary of the material. We build the $\LatticeGenerator$ by training a Gaussian mixture model using expectation maximization over the lattice parameters from our training dataset. $\LatticeGenerator$ is therefore a distribution that we can sample from to get a set of lattice parameters $lattice \sim \LatticeGenerator$. Since periodic boundaries are mutually dependent, we model them jointly using a Gaussian mixture over $\mathbb{R}^{3 \times 3}$. The periodic boundaries also determine the size of the unit cell, so we reject any periodic boundaries that produce unit cells of size less than $10 \mathrm{\AA}^3$. 

Periodic boundaries partially determine the bonding angles of the atoms on the edge of the unit cell, so by predicting the periodic boundaries first, we provide some chemical information about the atoms that should belong in the unit cell. Because of this, we provide the lattice information to all subsequent parts of \system{} as it will help determine atom types and their positions. Additionally, it also enables more design choices by allowing us to restrict boundaries to specific crystal systems.

\subsection{$\AtomGenerator$}

After predicting the lattice vectors, we autoregressively predict the atom types. We define two virtual nodes besides the standard atom nodes, a <start> node and an <end> node. The <start> node is used to prompt the model to generate additional nodes, while the <end> node is used to stop generation and implicitly determine the number of atoms in the unit cell. 

After the start node, we autoregressively generate a sequence of atoms types until we generate an <end> token. As input, $\AtomGenerator$ takes the lattice parameters sampled from $\LatticeGenerator$ and a <start> token. If the atoms $a_0,\ldots,a_{t-1}$ have already been generated, then the model $\AtomGenerator$ predicts the distribution of next possible atom types $p_t = \AtomGenerator(lattice, start, a_0, a_1, \ldots, a_{t-1})$. We sample from this distribution to get the next atom type $a_t \sim p_t$. We then use this predicted atom in the next step of atom generation. This repeats until $a_t = END$. We use an E(3) invariant GNN, as neither the orientation of the input lattice vectors nor the order of the input atoms should affect the final output \cite{satorras2021egnn}. The input atoms are represented as a fully connected graph where the lattice parameters are added into the node features.

\paragraph{Permutation Invariant Autoregression.}

\begin{figure}[t]
    \centering
    \includegraphics[width=\textwidth]{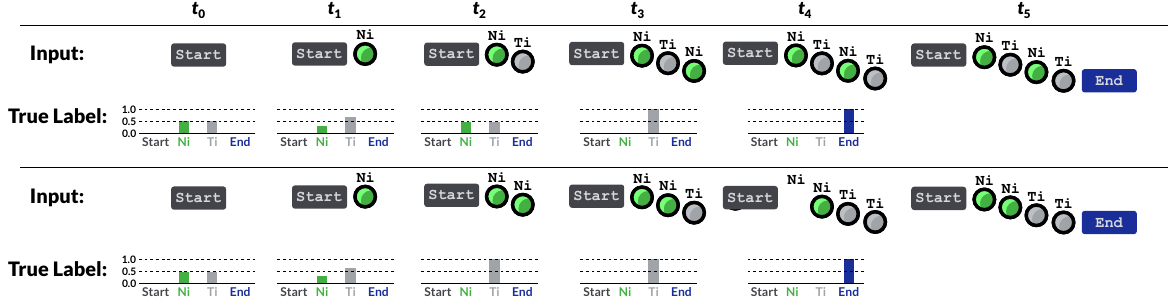}
    \caption{A visualization of the training input and labels provided for a fictitious nickel-titanium crystal. The true distribution has been shortened to only show <start>, nickel, titanium, and <end> in that order. Despite the order in which the atoms are added to the training input, steps $t_0$, $t_1$, and $t_3$ have the same true label as they have the same input atoms, whereas there are two possible orderings shown for step $t_2$. By design, the final chemical composition is identical for both permutations.}
    \label{fig:training-diagram}
\end{figure}

To train the autoregressive atom predictor, rather than imposing an implicit ordering on the atoms it predicts, we instead predict a categorical distribution over all atoms that remain in the unit cell at each time step. If no atoms remain in the unit cell, we predict a distribution that is 100\% the <end> token. Since this distribution remains the same no matter how the remaining atoms in the crystal are ordered, training becomes permutation invariant. In training, we minimize the KL-divergence loss between the output distribution of $\AtomGenerator$ and the true categorical distribution of the remaining atoms \cite{kldivergence}. Generally, if we are learning to predict a crystal $C = (L,A,X)$, then the model $\AtomGenerator$ would take some subset of the atoms $a \subseteq A$ and learn to predict the categorical distribution of the remaining atoms $\Bar{a} = A \setminus a$. Compared to the training algorithms of LLMs or SGEquiDiff where the training target is a single token, our distributional approach makes it clear that there can be multiple feasible tokens at each training step. A proof showcasing our permutation invariant training can be found in \cref{prop:train-inv}. A full description of the training algorithm can be found in \cref{alg:automatgen-train}. To illustrate this process, \cref{fig:training-diagram} illustrates a fictitious Ni$_2$Ti$_2$ crystal and shows two possible permutations and the corresponding true labels (distributions) that ensure that the final outcomes are identical.

\paragraph{Policy Guidance} 

Part of the benefit of autoregressive generation is that fewer overall steps are needed to generate the atom types compared to diffusion models. This allows for faster rejection and resampling based on policy guidance. We try three policy methods: 1) \textit{partial policy} where we reject if any step of the partially generated result seems unrealistic during atom generation, 2) \textit{full policy} where we reject after the atom generation step if it seems like an unrealistic set of atoms and lattice parameters, and 3) \textit{SMACT} where we use the SMACT solver on the final chemical composition to reject if it is not charge-balanced, assuming known oxidation states of individual elements \cite{Davies2019}. 

To train the partial and full policy methods, we utilize an $E(3)$ invariant GNN and train it on the MP-20 dataset, where crystals in the MP-20 dataset are treated as realistic examples, and fake examples are made by removing, adding, or changing atoms in those realistic examples. The models are trained to classify whether a given crystal is real or fake based on the involved atoms and lattice parameters. Any crystals these policies mark fake are rejected during generation.

\subsection{$\PositionGenerator$}

The final step of our procedure is a Riemannian flow matching step to determine atom positions. We use an $E(3)$ equivariant GNN to construct $\PositionGenerator$ \cite{satorras2021egnn}. We instantiate our positions by $X_i \sim \uniform([0,1)^{N \times 3})$ and then run Riemannian flow matching using a 3-torus as our manifold. We utilize flow matching as it is the most efficient method of position generation, and because positioning is a global problem, global flow matching is most effective.

To train $\PositionGenerator$ for each crystal $C = (L,A,X)$ in our dataset, we generate a set of random positions $X' \sim \uniform([0, 1)^{N\time3}$ and compute the velocities $V = X' - X$. We then sample a random timestep $t \sim \uniform([0,1])$ and compute a position $X_{in} = (1-t)X + tX'$. Finally we train $\PositionGenerator$ to minimize $||\PositionGenerator(L, A, X_{in}, t) - V||^2_2$.

\subsection{Property-Targeted Generation}

\system{} has the ability to be conditioned to produce materials with desired properties. We change the Gaussian mixture model in the first step to a conditional one that changes means and standard deviations by applying Gaussian conditioning to each component and reweighting the mixture so the result is a new Gaussian mixture model representing the distribution of target variables given observed ones \cite{cgmm2024}. Additionally, for the GNNs in the atom and position generation steps, we concatenate the desired property values to the input features to allow property-targeted generation. 

\section{Experiments}

In this section, we examine the benefits of our method and show how permutation invariance, autoregressive generation, policy guidance, and flow matching enable a fast and efficient model that can generate novel materials. We evaluate our performance over the MP-20 subset of the materials project and Lemat-GenBench, as they are a standard crystal training dataset and evaluation set for \textit{de novo} generation \cite{Jain2013materialsproject, xie2021cdvae, betala2026lematgenbenchunifiedevaluationframework}. Additionally, we evaluate the performance of \system{} over conditional generation of crystal slabs using the dataset from \citet{schindler2024discoveryofstablesurfaces} to show that our model is able to handle larger, more realistic structures and conditional generation better than other autoregressive systems with conditional generation. We find that \system{} is able to be successfully policy driven while maintaining the necessary speed to be the fastest autoregressive model.

\begin{table}[t]
    \centering
    \caption{Comparison of \system{} versus an ablation using the same GNN setup but trained to learn permutation invariance rather than using the permutation invariant loss function. We see that \system{} outperforms the ablation in almost all categories.}
    \resizebox{\textwidth}{!}{
        \begin{tabular}{lccccccccc}
            \toprule
            &Valid & Unique & Novel & & & Metastable & MSUN \\
            Model  & (\%) $\uparrow$ & (\%) $\uparrow$ & (\%) $\uparrow$ & JSDistance $\downarrow$ & MMD $\downarrow$ & (\%) $\uparrow$ & (\%) $\uparrow$ \\
            \hline
            \system{} & \textbf{83.72} & \textbf{82.56} & \textbf{63.88} & \textbf{0.4682} & \textbf{0.00572} & 14.88 & \textbf{1.36} \\
             \ \ \ w/o Perm. Inv.  & 82.16 & 80.56 & 62.92 & 0.4845 & 0.00979 & \textbf{15.00} & 1.00 \\
            \bottomrule
        \end{tabular}
    }
    \label{tab:ablation}
\end{table}

\subsection{Ablation} 

To show that permutation invariance improves performance, we run an ablation of our method where we remove the permutation invariance and instead have the model learn permutation invariance by randomly shuffling the order of the input and output atoms at every training step. Under these conditions, we see a drop in performance from MSUN of 1.36\% to a MSUN of 1.00\%. Additionally, the ablated method also underperforms on all distributional and validity categories, as shown in \cref{tab:ablation}. This drop is significant as it means it would take much longer to find materials in high-throughput applications. This likely means that other methods, like SGEquiDiff, would also see an improvement in performance if they utilized a permutation invariant approach to atom generation.

\begin{table}[t]
    \centering
    \caption{Comparison of \system{} for various hyperparameters, trained on the MP-20 dataset. The best sampling parameters are $\tau=0.7$ and $P=0.9$ as it produces the highest MSUN\ rate.}
    \resizebox{\textwidth}{!}{
        \begin{tabular}{ccccccccccc}
           \toprule
           \multicolumn{2}{l}{Hyperparameters} & Valid & Unique & Novel & & & Metastable & MSUN \\
           $\tau$ & $P$ & (\%) $\uparrow$ & (\%) $\uparrow$ & (\%) $\uparrow$ & JSDistance $\downarrow$ & MMD $\downarrow$ & (\%) $\uparrow$ & (\%) $\uparrow$ \\
            \hline
            1.0 & 1.0 & 82.96 & \textbf{82.84} & \textbf{75.32} & 0.4914 & 0.00904 & 3.92 & 0.76 \\
            0.7 & 1.0 & 82.72 & 81.92 & 65.88 & 0.4748 & 0.00916 & 12.04 & 1.24 \\
            0.7 & 0.9 & \textbf{83.72} & 82.56 & 63.88 & 0.4682 & \textbf{0.00572} & \textbf{14.88} & \textbf{1.36} \\
            1.0 & 0.9 & 82.52 & 81.64 & 64.80 & \textbf{0.4651} & 0.00886 & 12.56 & 0.96 \\
            \bottomrule
        \end{tabular}
    }
    
    \label{tab:results-sampling}
\end{table}

\subsection{Controllability} 

Autoregressive generation can be modified by two sampling parameters: temperature $\tau$ and nucleus sampling $P$. We test different values of these parameters to see their effect on the generated samples and MSUN rate in \cref{tab:results-sampling}. We find that the sampling parameters $\tau=0.7$ and $P=0.9$ produce the highest MSUN rate. A lower temperature makes the model predict less diverse structures as it decreases the chance the model will pick something out of distribution. The nucleus sampling helps remove some of the noise from the model's output, also helping it produce more consistent results. We see this reflected in the lower unique and novel structures and the improved distance metrics.

\begin{table}[t]
    \centering
    \caption{Comparison of different policy methods for rejection sampling. Time is computed by generating 1000 structures and dividing the total time to run the program by the number of structures generated. SMACT policy method performs the best with the highest MSUN, but partial policy generation has a lower distance to true data. It is likely that the partial policy was overfitting and is making the model generate structures that are in the training data.}
    \resizebox{\textwidth}{!}{
        \begin{tabular}{lcccccccccc}
             \toprule
             & Valid & Unique & Novel & & & Metastable & MSUN & Time \\
             Policy  & (\%) $\uparrow$ & (\%) $\uparrow$ & (\%) $\uparrow$ & JSDistance $\downarrow$ & MMD $\downarrow$ & (\%) $\uparrow$ & (\%) $\uparrow$ & (s) \\
            \hline
             None & 83.72 & 82.56 & 63.88 & 0.4682 & \textbf{0.00572} & \textbf{14.88} & 1.36 & \textbf{0.182} \\
             Partial & \textbf{85.64} & \textbf{83.92} & 65.68 & \textbf{0.4785} & 0.00971 & 14.84 & 0.92 & 0.314 \\
             Full & 84.16 & 59.32 & 63.52 & 0.5309 & 0.03264 & 13.96 & 1.12 & 1.395 \\
             SMACT & 82.28 & 77.44 & \textbf{68.80} & 0.4818 & 0.01482 & 9.72 & \textbf{1.92} & 0.220 \\
             \bottomrule
        \end{tabular}
    }
    \label{tab:policy}
\end{table}

\subsection{Policy Guided Generation}

We test our three different policy setups to filter the partially generated structures. Since our method is autoregressive, rejecting partially generated structures makes more sense than in a denoising setup as we are confident that the partially generated structure will appear in the final result. \Cref{tab:policy} shows that SMACT policy rejection works best, while the other two methods actually reduce MSUN. This is likely because the methods are prematurely cutting off potentially stable crystals because they fall out of distribution compared to MP-20, and implies that maybe a more sophisticated policy model trained on a more diverse dataset is necessary. SMACT works best because it explicitly enforces physical laws on the chosen atoms to ensure that they are properly charge-balanced.

Part of the benefit of \system{} is the stepwise development of crystals. Spending a maximum of 20 autoregressive steps is much less than the 250 or more steps required for flow matching or diffusion models. Additionally, our model generates much cleaner intermediaries faster than LLMs. This means that rejection of samples happens faster and more reliably. Moreover, the addition of physics-informed policies shows that adding physics-informed systems like machine learning interatomic potentials may improve the generation performance of other models.

\subsection{\textit{De Novo} Generation}

\begin{table*}[t]
    \centering
    \caption{The following models were all trained on MP-20 and evaluated using LeMat-GenBench \cite{betala2026lematgenbenchunifiedevaluationframework}. The parameter counts were taken from \citet{chang2025sgequidiff}. Inference time is the time to predict one sample averaged over one thousand samples on a single NVIDIA RTX 2080 Ti GPU. FlowLLM was run on two NVIDIA L40S GPUs and CrysLLMGen was run on a single L40S, as required due to the size of the LLM, but both of their diffusion or flow matching steps were run on an RTX 2080 Ti. For system type, AR means autoregressive, FM means flow matching, and DF means diffusion. \system{} is the fastest autoregressive system in terms of seconds per metastable, unique, and novel crystal. }
    \resizebox{\textwidth}{!}{
        \begin{tabular}{lccccccccccccccc}
        \toprule
            & System & Inference & & & Parameter & Valid & Unique & Novel & & & Metastable & MSUN & Inference & Time/MSUN \\
            Model & Type & Steps & $\tau$ & $P$ & Count & (\%) $\uparrow$ & (\%) $\uparrow$ & (\%) $\uparrow$ & JSDistance $\downarrow$ & MMD $\downarrow$ & (\%) $\uparrow$ & (\%) $\uparrow$ & Time (s) $\downarrow$ & (s/crystal) $\downarrow$ \\
            \hline
            CDVAE \cite{xie2021cdvae} & DF & 100 & - & - & \textbf{4.9M} & \textbf{96.92} & \textbf{96.88} & \textbf{93.16} & 0.5130 & 0.0170 & 2.92 & 2.24 & 2.049 & 91.47 \\
            DiffCSP \cite{jiao2023diffcsp} & DF & 1000 & - & - & 12.3M & 96.12 & 95.32 & 68.76 & 0.4575 & 0.0045 & 27.04 & \underline{7.72} & 0.664 & \underline{8.60} \\
            DiffCSP++ \cite{jiao2024diffcsppp} & DF & 1000 & - & - & 12.3M & 93.12 & 92.84 & 68.36 & 0.2251 & 0.0053 & 15.48 & 2.76 & 2.143 & 77.64 \\
            SymmCD \cite{levy2024symmcd} & DF & 1000 & - & - & 60.4M & 61.48 & 60.84 & 38.76 & \underline{0.2192} & \textbf{0.0029} & 16.56 & 2.72 & 0.543 & 19.96 \\
            FlowMM \cite{miller2024flowmm} & FM & 250 & - & - & 12.3M & 90.60 & 90.52 & 70.44 & 0.4294 & 0.0060 & 15.12 & 2.36 & \textbf{0.098} & \textbf{4.15} \\
            MatterGen \cite{MatterGen2025} & DF & 1000 & - & - & 44.6M & 96.12 & \underline{95.72} & \underline{71.24} & 0.4356 & 0.0057 & \underline{31.60} & \textbf{14.72} & 11.243 & 76.38 \\
            CrysLLMGen \cite{khastagir2025CrysLLMGen} & AR+DF & 900 & 1.0 & 0.7 & 7B & \underline{96.16} & 83.88 & 35.00 & 0.4922 & 0.0055 & \textbf{35.24} & 1.24 & 8.421 & 679.11 \\
            FlowLLM \cite{sriram2024flowllm} & AR+FM & 250 & 0.7 & 0.9 & 70B & 77.80 & 74.80 & 45.20 & 0.5316 & \underline{0.0033} & 2.20 & 0.04 & 4.009 & 10022.50 \\
            SGEquiDiff \cite{chang2025sgequidiff} & AR+DF & 1000 & 1.0 & 1.0 & \underline{5.5M} & 92.64 & 91.36 & 56.16 & \textbf{0.2119} & 0.0058 & 26.12 & 3.40 & 0.657 & 19.32 \\
            \system{} & AR+FM & 250 & 0.7 & 0.9 & 24.5M & 82.28 & 77.44 & 68.80 & 0.4818 & 0.01482 & 9.72 & 1.92 & \underline{0.220} & 11.45 \\
        \bottomrule
        \end{tabular}
    }
    \label{tab:results-benchmarks}
\end{table*}

We analyze the performance across a variety of models trained on the MP-20 dataset. We focus on finding the model that performs best under time/MSUN. This metric matters most in industry, where speed means more possible materials to be tested in a given time span. While some models may make more MSUN structures, the extra time it takes means you could produce more metastable, unique, and novel structures by running other methods for longer. \Cref{tab:results-benchmarks} shows that our model is the third best under time/MSUN and the best out of all autoregressive models. 

\subsection{Conditional Generation}

\begin{table}[t]
    \centering
    \caption{Performance of each autoregressive system on the slab dataset. Cleavage energy and work function are the average difference between the true and the predicted cleavage energy and work function by FIRE-GNN \cite{Hsu2025FireGNN}. Time was computed by taking the total time to generate one slab per cleavage energy and work function combo in the test dataset and then dividing by the total slabs generated while being run on an NVIDIA L40s. Failure rate is the number of unparseable CIFs generated by each model. Time/Slab is the number of seconds to generate a successful slab.}
    \resizebox{\textwidth}{!}{
        \begin{tabular}{lcccccccccccc}
        \toprule
            & Parameter & & & Failure $\downarrow$ & \multicolumn{2}{c}{Cleavage Energy (eV/\AA$^2$) $\downarrow$}  & \multicolumn{2}{c}{Work Function Top (eV) $\downarrow$} & \multicolumn{2}{c}{Work Function Bottom (eV) $\downarrow$} & Time $\downarrow$ & Time/Slab $\downarrow$ \\
            Model & Count & $\tau$ & $P$  & (\%) & MAE & RMSE & MAE & RMSE & MAE & RMSE & (s) & (s/slab)\\
            \hline
            CrystalLLM  \cite{gruver2023CrystalTextLLM} & 13B & 0.9 & 0.95 & 20.65 & 170.1244 & 9354.3447 & 348.1756 & 18897.0117 & 486.4960 & 26589.1836 & 2.66 & 3.35 \\ 
            CrystaLLM-$\pi$ \cite{bone2025discoveryrecoverycrystallinematerials} & 43M & 0.9 & 0.95 & 88.60 & 0.8954 & 15.9552 & 18.9873 & 340.4899 & 8.8132 & 121.6166 & \textbf{0.78} & 6.84 \\ 
            \system{} & 24M & 0.7 & 0.9 & \textbf{0.00} & \textbf{0.1884} & \textbf{3.6829} & \textbf{3.2326} & \textbf{75.0585} & \textbf{2.3531} & \textbf{48.4465} & 2.00 & \textbf{2.00} \\
            \bottomrule
        \end{tabular}   
    }
    
    \label{tab:results-slabs}
\end{table}

We validate \system{} on a dataset of crystal slabs annotated with cleavage energy and work function \cite{schindlerWorkFunctionCleavage2024}. This dataset poses a unique challenge: each bulk crystal can yield up to 13 distinct slabs, expanding $\approx$3,000 bulk structures into $\approx$33,000 slabs with up to 90 atoms each. Moreover, surface properties like the work function and structures are absent from bulk benchmarks, making this a more realistic generation domain.

We baseline \system{} against various LLM techniques, which enable conditioned generation. For each set of cleavage energy and work functions found in the testing dataset, we generate a candidate slab per model and evaluate the performance of the generated slab based on how close its actual cleavage energy and work function are to the desired cleavage energy and work function. To compute this, we use a machine learning method called FIRE-GNN \cite{Hsu2025FireGNN}.

Table \ref{tab:results-slabs} showcases the performance of our model and baselines. It shows that \system{} performs the best at producing slabs with properties close to the true value. Additionally, we are the fastest among all the models at generating successful slabs.
 
\section{Conclusion}

We propose \system{}, an efficient and flexible generative method for guided generation of novel materials, which efficiently incorporates permutation invariance into the generative process by predicting over a space of probabilities on atom types. In standard benchmarks on the MP-20 dataset, \system{} demonstrates comparable performance to existing state-of-the-art baselines, while retaining a clear advantage in computational efficiency necessary for high-throughput screening tasks. Furthermore, we demonstrate how \system{} can be steered by policy during the autoregressive atom generation stage, allowing finer control over generated materials. We additionally demonstrate how \system{} is flexible, allowing for extensions beyond bulk crystal generation, by performing a novel conditional generation task on crystal slabs.

\paragraph{Limitations and Future Work}

One of the limitations of our method is that we use a separate model for each of the three steps in our generation process which does increase complexity, but it allows for more control over generation, including selecting the choice of periodic boundaries and enabling policy-guided generation. Future work may incorporate additional policy and physics-informed techniques into the generation pipeline.


\bibliography{neurips_2026}

\appendix

\section{Training and Sampling \system{}}

\begin{algorithm}[h]
    \begin{algorithmic}
        \State Input $\eta$
        \State Initialize $\theta$
        \While{$\theta$ is not converged}
            \For{$C = \{L,A,X\}$ {\bfseries in} $D_\text{train}$}
                \For{Every subsets $a, \Bar{a}$ {\bfseries in} $A$}
                    \If{$\Bar{a} == \emptyset$}
                        \State $\Bar{a} \leftarrow END$
                    \EndIf
                    \State $p_{\Bar{a}} = \frac{1}{|\mathbf{a}|} \sum_{j=1}^{|\mathbf{a}|} \mathbf{e}_{a_j}$
                    \State $p_a = \AtomGenerator_\theta(start, L, a)$
                    \State $\ell = D_{KL}(p_a || p_{\Bar{a}})$
                    \State $\theta = \theta - \eta \nabla_\theta \ell$
                \EndFor
            \EndFor
        \EndWhile
    \end{algorithmic}
    \caption{Training $\AtomGenerator$}
    \label{alg:automatgen-train}
\end{algorithm}

\begin{algorithm}[h]
    \begin{algorithmic}
        \State Input $numSteps, maxAtoms$
        \State $L \sim \LatticeGenerator(\mathbb{R}^{3 \times 3})$
        \State Initialize $A = [], end = False$
        \While{not $end$ and $A$.length() < $maxAtoms$}
            \State $a = \AtomGenerator(start, L, A)$
            \If{$a$ is $END$}
                \State $end = True$
            \Else
                \State $A \leftarrow a$
            \EndIf
        \EndWhile
        \State $N = len(atoms)$
        \State Initialize $X ~ \uniform([0,1)^{N \times 3})$
        \For{$t=1$ {\bfseries to} $0$ {\bfseries in} $numSteps$}
            \State $v = \PositionGenerator(L, A, X, t)$
            \State $X = X - v/numSteps$
        \EndFor
        \State \textbf{return} $L, A, X$
    \end{algorithmic}
    \caption{Sampling from \system{}}
    \label{alg:automatgen-sample}
\end{algorithm}

\section{Proof of Permutation-Invariant Training}

\begin{proposition}
    \label{prop:train-inv}
    Given a crystal $C = (L,A,X)$, for any list of input atoms $a_{in} \subset A$ and output atoms $a_{out} = A \setminus a_{in}$, no matter the permutations $\pi_1, \pi_2$ applied to $a_{in}$ and $a_{out}$, $\AtomGenerator$ would receive the same training update. 
\end{proposition}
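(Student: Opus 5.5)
The proof follows the dependency chain of one training step and shows that each link depends only on the multisets underlying $a_{in}$ and $a_{out}$, not on their orderings. From Algorithm \ref{alg:automatgen-train}, one update for a pair $(a_{in}, a_{out})$ is $\theta \leftarrow \theta - \eta\nabla_\theta \ell$ with $\ell = D_{KL}(p_{a_{in}} \,\|\, p_{a_{out}})$. Here $p_{a_{in}} = \AtomGenerator_\theta(start, L, a_{in})$ is the model prediction, and $p_{a_{out}}$ is the empirical categorical distribution of $a_{out}$, or the point mass on $END$ when $a_{out} = \emptyset$. So it suffices to show two things: $p_{a_{out}}$ is invariant under permutations $\pi_2$ of $a_{out}$, and the map $\theta \mapsto p_{a_{in}}$ is invariant under permutations $\pi_1$ of $a_{in}$ as a function of $\theta$. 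Then the loss $\ell(\theta)$ is the same function of $\theta$ for every choice of $\pi_1,\pi_2$, hence has the same gradient and yields the same update.

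\emph{Label invariance.} Write $p_{a_{out}} = \frac{1}{|a_{out}|}\sum_{j=1}^{|a_{out}|} \mathbf{e}_{a_{out,j}}$. A permutation $\pi_2$ only reorders the summands of a finite sum in $\mathbb{R}^{|atoms|+1}$. By commutativity of addition the sum is unchanged, and the normalizer $|a_{out}|$ is unchanged as well. Equivalently, the $k$-th coordinate equals the count of element $k$ in $a_{out}$ divided by $|a_{out}|$, which is a multiset statistic. The empty case gives the fixed vector $\mathbf{e}_{END}$, which trivially does not depend on $\pi_2$.

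\emph{Prediction invariance.} This is the step that needs care, because it depends on the architecture rather than on the loss. I would use that $\AtomGenerator$ is an E(3)-invariant message-passing GNN of the form in \cite{satorras2021egnn}, applied to a fully connected graph on the <start> node and the nodes of $a_{in}$. The lattice parameters enter only as node features shared by all nodes, and no positional or ordering encoding and no causal mask are used. The plan is then an induction over layers. Permuting the input nodes by $\pi_1$ permutes the edge set of the complete graph accordingly. Each message is a function of the endpoint features, and each aggregation is a sum over neighbors, so every layer is permutation-equivariant: the node embeddings after each layer are permuted by the same $\pi_1$. The output distribution is read from the <start> node, which is not permuted, or from a sum/mean readout over nodes. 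Either readout is therefore invariant, so $\AtomGenerator_\theta(start, L, \pi_1 a_{in}) = \AtomGenerator_\theta(start, L, a_{in})$ for all $\theta$.

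\emph{Conclusion.} The equality holds identically in $\theta$, so the Jacobians with respect to $\theta$ agree too. Combining with label invariance gives $\nabla_\theta \ell$ independent of $(\pi_1,\pi_2)$, which proves the claim. The main obstacle is making the architectural assumption precise: the result is only as strong as the claim that no component of $\AtomGenerator$ (readout, positional features, batching) breaks symmetric aggregation. I would state that assumption explicitly, in the form ``message passing with sum aggregation and a symmetric readout,'' rather than derive it.
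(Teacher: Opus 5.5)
Your proposal is correct and follows the same route as the paper's proof: the label $p(a_{out})$ depends only on the multiset of remaining atoms, the GNN output is unchanged by reordering $a_{in}$ because it aggregates over all nodes, and so the KL loss and the resulting update are invariant. Your version is more careful than the paper's in three places: it uses a multiset rather than a set, it gives an explicit layerwise-equivariance plus invariant-readout argument, and it notes that the model equality must hold identically in $\theta$ for the gradients, not just the loss values, to coincide. The structure of the argument is otherwise the same.
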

\begin{proof}
    Let $p(a_{out})$ be the categorical distribution of $a_{out}$. Since $p$ is a distribution, it is invariant to any permutation of atoms $p(a_{out}) = p(\pi_2(a_{out}))$ as it only depends on the set which $\pi_2$ doesn't affect. The model $\AtomGenerator$ is permutation invariant by construction as GNNs ignore node ordering due to aggregating over all nodes, so $\AtomGenerator(L, start, a_{in}) = \AtomGenerator(L, start, \pi_1(a_{in}))$. Therefore each training update will be permutation invariant as the loss of each step will be permutation invariant $D_{KL}(\AtomGenerator(L, start, a_{in}), p(a_{out})) = D_{KL}(\AtomGenerator(L, start, \pi_1(a_{in}))), p(\pi_2(a_{out})))$. 
\end{proof}

\section{Example Crystals}
\label{app:example-crystals}

\begin{figure}[h]
    \centering
    \begin{subfigure}{0.3\textwidth}
        \includegraphics[width=\textwidth]{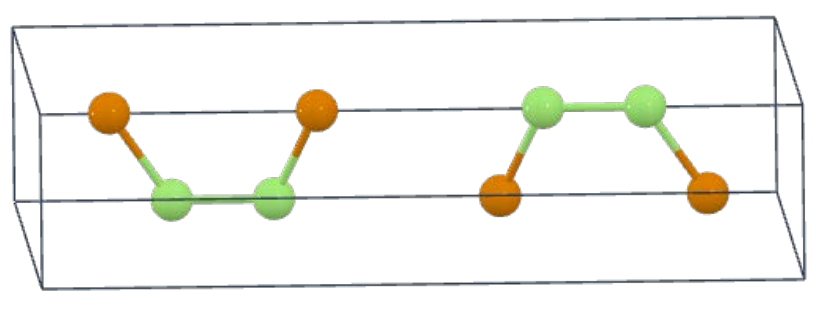}
        \caption{Example of a real Ga$_4$Te$_4$ crystal from MP-20}
    \end{subfigure}
    \begin{subfigure}{0.3\textwidth}
        \centering
        \includegraphics[width=\textwidth]{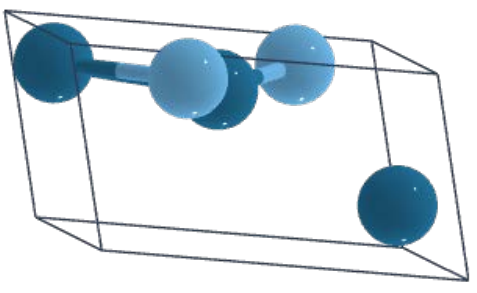}
        \caption{Example of a generated Re$_3$Ti$_2$ crystal before relaxation.}
    \end{subfigure}
    \begin{subfigure}{0.3\textwidth}
        \includegraphics[width=\textwidth]{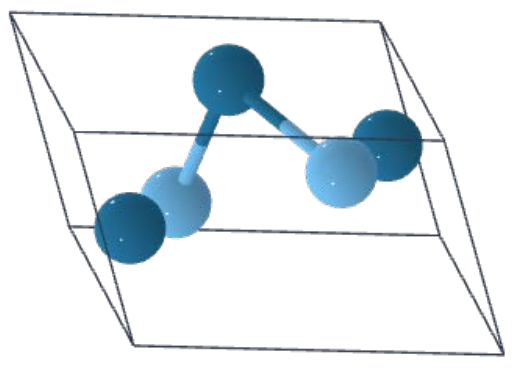}
        \caption{Example of a generated Re$_3$Ti$_2$ crystal after relaxation.}
    \end{subfigure}
    \caption{Examples of real and generated crystals visualized by the Material Project's crystal toolkit \cite{Jain2013materialsproject}.}
    \label{fig:generated-examples}
\end{figure}



\end{document}